\algnewcommand{\Initialize}[1]{%
  \State \textbf{Initialize:}
  \Statex \hspace*{\algorithmicindent}\parbox[t]{.8\linewidth}{\raggedright #1}
}
\algnewcommand{\Inputs}[1]{%
  \State \textbf{Inputs:}
  \Statex \hspace*{\algorithmicindent}\parbox[t]{.8\linewidth}{\raggedright #1}
}
\newcommand{\A}{\mathcal{A}} 
\newcommand{\C}{\mathcal{C}}
\newcommand{\F}{\mathcal{F}}
\newcommand{\G}{\mathcal{G}} 
\renewcommand{\H}{\mathcal{H}}
\renewcommand{\L}{\mathcal{L}}
\newcommand{\M}{\mathcal{M}}
\renewcommand{\S}{\mathcal{S}} 
\newcommand{\T}{\mathcal{T}}
\newcommand{\trace}{\pi}
\newcommand{\LTL}{{LTL}\xspace}
\newcommand{\LTLf}{{\sc ltl}$_f$\xspace}
\newcommand{\ls}{{\sf{lst}}}
\newcommand{\post}{{\sf{Post}}}
\newcommand{\alphabet}{\Sigma}
\newcommand{\prop}{\mathtt{Prop}} 
\newcommand{\aut}{\mathcal{A}}
\newcommand{\trans}{\delta}
\newcommand{\acc}{\mathtt{Acc}}
\newcommand{\infset}{\mathtt{Inf}}
\newcommand{\lang}{\mathtt{L}}
\newcommand{\ltltrue}{\mathsf{true}}
\newcommand{\ltlU}{\mathsf{U}}
\newcommand{\ltlF}{\lozenge}
\newcommand{\ltlG}{\square}
\newcommand{\prob}{\rm{Pr}}
\newcommand{\mdpsttr}{\mathcal{F}}
\newcommand{\probtr}{\mathcal{T}}
\newcommand{\fpaths}{\rm{FPaths}}
\newcommand{\distr}{\mathsf{Distr}}
\algnewcommand{\Try}[1]{%
  \State \textbf{Try:}
  \Statex \hspace*{\algorithmicindent}\parbox[t]{.8\linewidth}{\raggedright #1}
}
\newtheorem{definition}{Definition}
\newtheorem{lemma}{Lemma}
\newtheorem{problem}{Problem}
\newtheorem{example}{Example}
\newtheorem{theorem}{Theorem}
\title{\LARGE \bf Planning with Linear Temporal Logic Specifications:\\ Handling Quantifiable and Unquantifiable Uncertainty}
\author{Pian Yu$^*$, Yong Li, David Parker, and Marta Kwiatkowska
\thanks{This work was supported by the ERC AdG FUN2MODEL (Grant agreement ID: 834115), ISCAS Basic Research (Grant Nos. ISCAS-JCZD-202406, ISCAS-JCZD-202302), CAS Project for Young Scientists in Basic Research (Grant No. YSBR-040), and ISCAS New Cultivation Project ISCAS-PYFX-202201. }
\thanks{Pian Yu is currently with the Department of Computer Science, University College London (UCL), United Kingdom
        {\tt\small {pian.yu}@ucl.ac.uk} $^*$Work was done when she was with the Department of Computer Science, University of Oxford.}
\thanks{David Parker and Marta Kwiatkowska are with the Department of Computer Science, University of Oxford, United Kingdom
        {\tt\small {david.parker, marta.kwiatkowska}@cs.ox.ac.uk}}%
\thanks{Yong Li is with Key Laboratory of System Software (Chinese Academy of Sciences) and State Key Laboratory of Computer Science, Institute of Software, Chinese
Academy of Sciences, China
        {\tt\small {liyong}@ios.ac.cn}}
}
\begin{document}

\maketitle
\thispagestyle{empty}
\pagestyle{empty}

\begin{abstract}
This work studies the planning problem for robotic systems under both quantifiable and unquantifiable uncertainty. The objective is to enable the robotic systems to optimally fulfill high-level tasks specified by Linear Temporal Logic (LTL) formulas. To capture both types of uncertainty in a unified modelling framework, we utilise Markov Decision Processes with Set-valued Transitions (MDPSTs).
We introduce a novel solution technique for the optimal robust strategy synthesis of MDPSTs with LTL specifications. To improve efficiency, our work leverages limit-deterministic B\"uchi automata (LDBAs) as the automaton representation for LTL to take advantage of their efficient constructions. To tackle the inherent nondeterminism in MDPSTs, which presents a significant challenge for reducing the LTL planning problem to a reachability problem, we introduce the concept of a Winning Region (WR) for MDPSTs. Additionally, we propose an algorithm for computing the WR over the product of the MDPST and the LDBA. Finally, a robust value iteration algorithm is invoked  to solve the reachability problem. We validate the effectiveness of our approach through a case study involving a mobile robot operating in the hexagonal world, demonstrating promising efficiency gains.  
\end{abstract}


\section{Introduction}

Uncertainty in planning can be categorised into two types based on the effects of actions: probabilistic and nondeterministic. In probabilistic planning, uncertainty is quantified using probabilities, with Markov Decision Processes (MDPs) and their generalisations serving as the standard modelling frameworks \cite{puterman2014markov,natarajan2022planning}. Nondeterministic planning, on the other hand, addresses unquantifiable uncertainty (such as ambiguity and adversarial environments), typically exploiting the fully observable nondeterministic domain (FOND) as a modelling framework \cite{CimattiPRT03,FOND,GeffnerBonet2013}. Both probabilistic and nondeterministic planning have been extensively studied, leading to significant advances in the field.

Robotic systems are susceptible to many different types of uncertainty, such as sensing and actuation noise, unpredictability in a robot's perception, and dynamic environments \cite{thrun2005probabilistic,du2011robot}. Some sources of uncertainty, such as sensing and actuation noise, can be quantified probabilistically using statistical methods. However, ambiguous uncertainties, such as unpredictable perception and dynamic environments, are often more challenging to quantify. Existing works in robot planning mainly focus on addressing either quantifiable or unquantifiable uncertainty. However, in scenarios such as human-robot collaboration \cite{ajoudani2018progress}, both quantifiable and unquantifiable uncertainties are present. Quantifiable uncertainties may arise from robotic actuation errors, while unquantifiable uncertainties often stem from the unpredictable nature of human behavior.
To the best of our knowledge, approaches that can effectively handle both types of uncertainty simultaneously have, however, been less explored.
In light of this, we propose to utilise
MDPs with set-valued transitions (MDPSTs)~\cite{trevizan2007planning,trevizan2008mixed} as our unified modelling framework. 
They are attractive because they admit a simplified Bellman equation compared to (more general) MDPs with imprecise probabilities~(MDPIPs) \cite{white1994markov,satia1973markovian,givan2000bounded} and Uncertain MDPs~(UMDPs)~\cite{nilim2005robust,buffet2005robust,hahn2019interval}, and thus stochastic games~\cite{Con93}. 

Recently, MDPSTs have been employed to formalise the \emph{trembling-hand problem in nondeterministic domains} ~\cite{yu2024trembling}, where the term ``trembling-hand" refers to the phenomenon in which an agent, due to faults or imprecision in its action
selection mechanism, may mistakenly perform unintended actions with a certain probability, potentially leading to goal failures. Specifically, this approach demonstrates that the human-robot co-assembly problem can be modelled using MDPSTs, yielding more efficient solution techniques compared to the stochastic game formulation. In this work, Linear Temporal Logic on finite traces~(\LTLf)~\cite{de2013linear} was used as the task specification language. \LTLf\ shares the same syntax as Linear Temporal Logic (\LTL)~\cite{Pnu77} but is interpreted over \emph{finite} rather than infinite traces~\cite{de2013linear}.
However, in many robotic applications, such as persistent surveillance and repetitive supply delivery, it is necessary to define the robot's tasks over \emph{infinite} trajectories.
Indeed, \LTL has been widely applied in robotics research for specifying complex temporal objectives over infinite traces, such as~\cite{ding2014ltl,schillinger2019hierarchical,guo2018probabilistic,sadigh2014learning,wen2016probably,hasanbeig2019reinforcement}, including 
MDPs with LTL objectives, 
e.g., \cite{kwiatkowska2013automated, kwiatkowska2022probabilistic,BrafmanGP18}. A typical approach involves converting the LTL specification into a Deterministic Rabin Automaton (DRA) and then taking the product of the MDP and the DRA \cite{baier2008principles}. This reduces the problem to a planning problem with a reachability goal over the product space. 

In this paper, we propose to formalise the planning problem for robotic systems under both quantifiable and unquantifiable uncertainty with temporal objectives as the strategy synthesis problem for MDPSTs with (full)  \LTL objectives.
We highlight that MDPSTs with LTL objectives are 
studied for the first time in this paper. Due to the presence of unquantifiable uncertainty in MDPSTs, computing the end components \cite{de1998formal} of an MDPST becomes nontrivial. As a result, the conventional procedure for MDPs with LTL objectives  based on conversion to DRAs does not apply, necessitating new solution techniques developed in this work.

The main contributions are summarised as follows. (i) We propose using MDPSTs as the modelling framework for robot planning under both quantifiable and unquantifiable uncertainty. Although the model has been relatively little studied since it was initially proposed in 2007 \cite{trevizan2007planning}, recent findings highlight its advantages, particularly in terms of computational efficiency \cite{yu2024trembling}. (ii) A novel solution technique is proposed for the optimal robust strategy synthesis for MDPSTs with LTL specifications. This technique addresses the inherent nondeterminism in MDPSTs, which complicates the reduction of the LTL planning problem to a reachability problem, by introducing the concept of a Winning Region (WR). 
To further improve efficiency, we leverage limit-deterministic B\"uchi automata (LDBAs)~\cite{courcoubetis1995complexity,hahn2013lazy,sickert2016limit}, which are typically smaller than conventional DRAs thanks to their efficient construction from LTL. 
We devise an algorithm for computing
WR over the product of the MDPST and the LDBA, and its correctness is demonstrated formally.

\section{Preliminaries}

This section provides preliminaries for \LTL \cite{Pnu77} and its equivalent LDBA  \cite{sickert2016limit} representation. 


\LTL~\cite{Pnu77} extends propositional logic with temporal operators.
The syntax of an \LTL formula over a finite set of propositions $\prop$ is defined inductively as:
\begin{equation}\label{LTL}  \varphi::= \ltltrue | p \in \prop |\neg\varphi|\varphi \wedge\varphi |\varphi \vee\varphi |\bigcirc \varphi|\varphi \mathsf{U}\varphi,
\end{equation}
where $\bigcirc$ (Next) and $\ltlU$ (Until) are temporal operators.
As usual, additional Boolean 
and temporal operators are derived as follows: $\varphi_1 \Rightarrow \varphi_2 \equiv\neg \varphi_1 \vee \varphi_2$ (Implies), $\lozenge \varphi \equiv \ltltrue \ltlU\varphi$ (Eventually), and $\ltlG \varphi = \neg (\ltlF \neg\varphi)$ (Always). The detailed semantics of \LTL can be found in \cite{Pnu77,baier2008principles}.

A \textit{trace} $\trace = \trace_0\trace_1\ldots$ is a finite or infinite sequence of propositional interpretations~(sets), where for every $i \geq 0$, $\trace_i \in 2^{\prop}$ is the $i$-th interpretation in $\trace$. Intuitively, $\trace_i$ is interpreted as the set of propositions that are $\ltltrue$ at instant~$i$.
For a finite trace $\trace \in (2^{\prop})^{*}$, we denote the interpretation at the last instant~(i.e., index) by $\ls(\trace)$,
and we write $\trace \models \varphi$ when an infinite trace $\trace \in (2^{\prop})^{\omega}$ satisfies \LTL formula~$\varphi$.
The language of $\varphi$, denoted
by $\lang(\varphi)$, is the set of infinite traces over $2^{\prop}$ that satisfy $\varphi$.

Every \LTL formula $\varphi$ over $\prop$ can be translated into a \emph{nondeterministic B\"{u}chi automaton} (NBA) $\aut$ \cite{DBLP:journals/iandc/VardiW94} over the alphabet $\alphabet = 2^{\prop}$ that recognises the language $ \lang(\varphi)$.
\begin{definition}[]\label{def:nba}
    An NBA $\aut$ is defined as a tuple $\aut = (Q, \alphabet, q_0, \trans, \acc)$, where $Q$ is the set of states, $q_0$ is the initial state, $\acc \subseteq Q$ is the set of accepting states, and $\trans : Q \times \alphabet \mapsto 2^Q$ is the nondeterministic transition function.
\end{definition}
A \emph{run} $\rho $ of $\aut$ over an infinite trace $w_0 w_1\cdots \in \alphabet^{\omega}$ is an infinite sequence $ r_0 r_1 \cdots \in Q^{\omega}$ of states such that $r_0 = q_0$ and, for all $i\geq 0$, we have $r_{i+1} \in \trans(r_i, w_i)$.
We denote by $\infset(\rho)$ the set of states that appear infinitely often in the run $\rho$.
A run $\rho$ of $\aut$ is called \emph{accepting} if $\infset(r)\cap \acc \neq \emptyset$.
The language of $\aut$, denoted $\lang(\aut)$, is the set of all traces that have an accepting run in $\aut$.

NBAs, in general, cannot be used for quantitative analysis of probabilistic systems.
Recently, a class of NBAs called limit-deterministic B\"uchi automata (LDBAs), under mild constraints, have been applied for the quantitative analysis of MDPs~\cite{courcoubetis1995complexity,hahn2013lazy,sickert2016limit}.
We will also use the LDBAs constructed by \cite{sickert2016limit} for our planning problem.

\begin{definition}[LDBA \cite{sickert2016limit}]\label{def:ldba}
    An LDBA $\aut$ is defined as a tuple $\aut = (Q, \alphabet, q_0, \trans, \acc)$ where 
\begin{itemize}
  \item $Q=Q_i\cup Q_{acc}$ is the set of states partitioned into two disjoint sets $Q_i$ and $Q_{acc}$,
  \item $q_0 \in Q_i$ is the initial state,
  \item $\acc \subseteq Q_{acc}$ is the set of accepting states, and
  \item $\trans = \trans_i \cup \trans_j \cup \trans_{\acc}$ where $\trans_i : Q_i \times \alphabet \mapsto Q_i, \trans_{acc} : Q_{acc} \times \alphabet \mapsto Q_{acc}$ and $\trans_j : Q_i \times \{\epsilon\} \mapsto 2^{Q_{acc}}$. 
\end{itemize}
\end{definition}

By Definition~\ref{def:ldba}, the LDBAs considered here are deterministic within $Q_i$ and $Q_{acc}$ components; the only nondeterminism lies in the $\epsilon$-transition jumps from $Q_i$-states to $Q_{acc}$-states via $\trans_j$ function.
Note that the $\epsilon$-transitions do not consume a letter from $\alphabet$: they are just explicit representations of the nondeterministic jumps in the runs of $\aut$.
To be accepting in $\A$, a run has to eventually make a nondeterministic jump through $\trans_j$ since all accepting states reside only in $Q_{acc}$.
It is easy to translate an \LTL formula $\varphi$ to an LDBA $\aut$ such that $\lang(\aut) = \lang(\varphi)$ using state-of-the-art tools such as Owl \cite{sickert2016limit} and Rabinizer 4 \cite{kvretinsky2018rabinizer}.

\section{Markov
decision processes with set-valued transitions}

This section introduces Markov
Decision Processes with Set-valued Transitions (MDPSTs) \cite{trevizan2007planning, trevizan2008mixed} as the modelling framework for robot planning under quantifiable and unquantifiable uncertainty.  Compared to the definition in  \cite{trevizan2007planning}, we further introduce a labelling function that associates the states of the MDPST with the propositions of an LTL formula.

\begin{definition}[MDPSTs]\label{def:MDPST}
A \emph{MDPST} $\mathcal{M}$ is a tuple $(S, s_0, A, \mdpsttr, \probtr, \L)$, where 
\begin{itemize}
    \item $S$ is a finite set of states;
    \item $s_0\in S$ is the initial state;
    \item $A$ is a finite set of actions;
    \item $\mdpsttr: S \times A \Mapsto 2^{2^S}$ is the set-valued nondeterministic state transition (partial) function;
    \item $\probtr: S \times A \times 2^S \mapsto (0, 1]$ is the transition probability (or mass assignment) function, i.e., given a set $\Theta\in \mdpsttr(s, a)$, $\probtr(s, a, \Theta)=\prob(\Theta|s, a)$,
    \item $\L: S \to 2^{\prop}$ is the proposition labelling function, where $\prop$ is a finite set of propositions.
\end{itemize}
\end{definition}
Traditional MDPs are, in fact, a special type of MDPSTs, where $\probtr$ only maps a state and an action to a probabilistic distribution over $S$ instead of the powerset $2^{S}$.
As usual, we use $A(s) \subseteq A$ to denote the set of actions \emph{applicable} at state $s$. Note that, in MDPSTs, the transition function $\mdpsttr(s, a)$ returns a set of state sets, i.e., $\F(s,a) \subseteq 2^\S$, and the transition probability function $\probtr$ expresses the probability of transitioning to such sets via a given action. 

A path $\xi$ of $\M$ is a finite or infinite sequence of alternating states and actions $\xi=s_0 a_0 s_1 a_1\cdots$, ending with a state if finite, such that for all $i 
\geq 0$, $a_i \in A(s_i)$ and $s_{i+1}\in \Theta_i$ for some set $\Theta_i \in \F(s_i, a_i)$.
We denote by $\fpaths$ ($\fpaths_s$) and ${\rm IPaths}$ (${\rm IPaths}_s$) the set of all finite and infinite paths of $\M$ (starting from state $s$), respectively. 
For a path $\xi=s_0 a_0 s_1 a_1\cdots$ of $\M$, the sequence $\L(\xi)=\L(s_0)\L(s_1), \cdots$ over $\prop$ is called the \emph{trace} induced by $\xi$ over $\M$.

A strategy $\sigma$ of $\M$ is a function $\sigma: \fpaths \to \distr(A)$ such that, for each $\xi\in \fpaths$, $\sigma(\xi)\in \distr(A({\ls}(\xi)))$, where ${\ls}(\xi)$ is the last state of the finite path $\xi$ and $\distr(A)$ denotes the set of all possible distributions over $A$.
Let $\Omega_\sigma^{\M}(s)$ denote the subset of (in)finite paths of $\M$ that correspond to strategy $\sigma$ and initial state $s$.
Let $\Pi_{\M}$ be the set of all strategies.

Let us now motivate MDPSTs for robot planning under uncertainty with a running example.

\vspace{-0.2cm}
\begin{figure}[ht]
\centering
\includegraphics[width=0.4\textwidth]{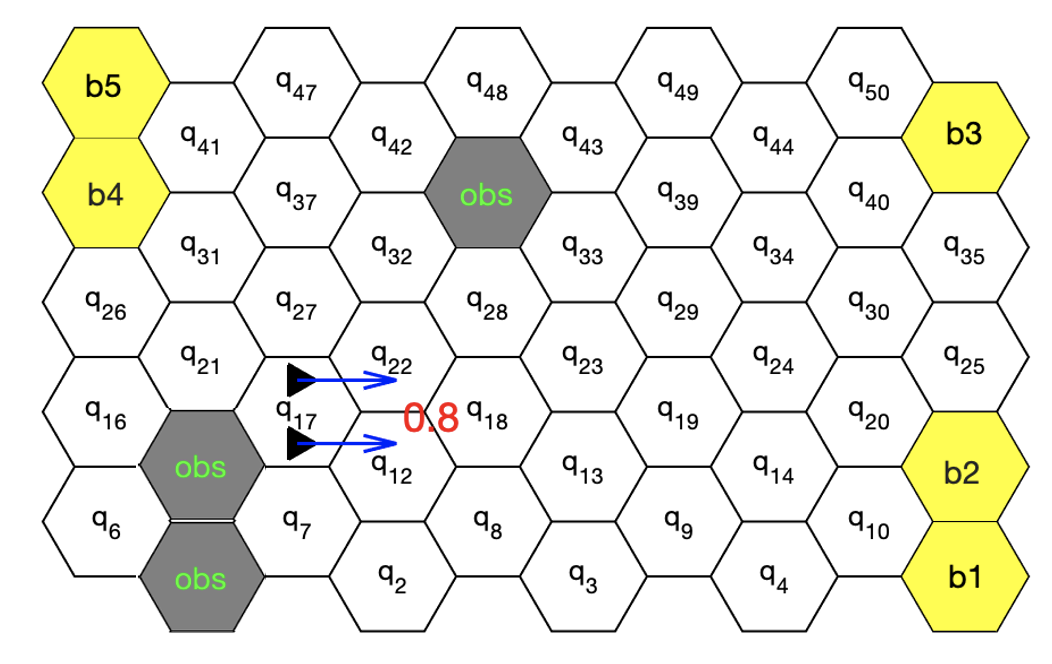}
	\caption{Hexagonal world. }
	\label{Fig:hexagonal}
\end{figure}
\vspace{-0.2cm}

\begin{example}[Hexagonal world]\label{example2}
We consider a hexagonal grid map, as shown in Fig.~\ref{Fig:hexagonal}. Hexagonal grid map representations offer several advantages over quadrangular grid maps, including lower quantisation error \cite{jeevan2018image} and enhanced performance in cooperative robot exploration \cite{quijano2007improving}.
The yellow regions, labelled as $\emph{b1}, \emph{b2}, \cdots, \emph{b5}$, are
base stations, and the grey regions, labelled as $\emph{obs}$, are obstacle regions. The state of the robot is defined as $(q_i, w)$, where $q_i$ represents the specific region where the robot is located and $w\in \{N, S, E, W\}$ represents the orientation of the robot. There are 4 action primitives $\{\textsc{FR}, \textsc{BK}, \textsc{TR}, \textsc{TL}\}$, which stand for move forward, move backward, turn right, and turn left, respectively. The robot’s motion is subject to uncertainty due to actuation noise and drifting. It is known that the probabilities of the 4 actions being executed correctly are $0.8, 0.7, 0.9,$ and $0.9$, respectively. 
Moreover, it should be noted that, depending on the precise location (which is not available due to imprecise sensing/perception) within each region and the orientation of the robot, there may be several potential target states when the action is correctly executed. For instance, as depicted in Fig.~\ref{Fig:hexagonal}, when the
robot is at state $(q_{17}, E)$ and wants to take an action $\textsc{FR}$, with probability 0.8 it ends up in state $(q_{12}, E)$ or $(q_{22}, E)$. 

In this example, it is convenient to abstract the robot dynamics
as an MDPST since one can combine the moves of all potential target regions as a set-valued transition for each correctly executed motion. For instance, when the robot's state is $(q_{17}, E)$ and it takes an action $\textsc{FR}$, then there are three possible transitions: i) with probability 0.8 it moves to the set $\{(q_{12}, E), (q_{22}, E)\}$, ii) with probability 0.1 it moves to the singleton set $\{(q_{27}, E)\}$, and iii) with probability 0.1 it moves to the singleton set $\{(q_7, E)\}$. 
\end{example}

Previously, MDPSTs were used to 
formulate the trembling-hand problem in nondeterministic domains~\cite{yu2024trembling}, where \LTLf was utilised as the specification language.
Note that, for \LTLf objectives, one can translate the formula into a deterministic finite automaton.
For \LTL goals, however, this is more involved because DFAs are not sufficient, and one has to resort to automata
over infinite traces.
This complicates the strategy synthesis procedure and requires us to develop new solution techniques 
based on the 
Winning Region (see Def. \ref{def:WR})
to capture acceptance conditions. 

Let us first recap the definitions of a \emph{feasible distribution} and \emph{nature} for MDPSTs originally introduced in \cite{yu2024trembling}.

Given a MDPST $\M$, define the set of reachable states of $(s, a)$ as $\post_{\M}(s, a) = \{s' \mid \exists \Theta\in \mdpsttr(s, a) \text{ s.t. } \probtr(s, a, \Theta)>0 \wedge  s'\in \Theta \}.$
A \emph{feasible distribution} of $\M$ guarantees that, given a state-action pair $(s,a)$, $\emph{(i)}$ the sum of probabilities of selecting a state from $\post_{\M}(s,a)$ equals 1; $\emph{(ii)}$ the sum of probabilities of selecting a state from a set $\Theta \in \F(s, a)$ equals $\T(s, a, \Theta)$;
and $\emph{(iii)}$  the probability of selecting a state outside $\post_{\M}(s,a)$ is 0. 
In the following definition, 
$\iota^{\Theta}_{s'}$ indicates whether $s'$ is \emph{in} ${\Theta}$. Hence $\iota^{\Theta}_{s'}=1$ if $s'\in \Theta$ and $\iota^{\Theta}_{s'}=0$ otherwise. Furthermore, $\alpha_{s'}^{\Theta}$ represents the probability of $s'$ being \emph{selected} from ${\Theta}$ if $s'\in \Theta$. Note that Definition 5 in \cite{yu2024trembling} only allows deterministic choice within $\Theta$ (i.e., $\alpha_{s'}^{\Theta} = 1$ if $s'$ is selected, and 0 otherwise). In this work, we also permit probabilistic choice within $\Theta$.

\begin{definition}[Feasible distribution in MDPSTs]\label{Def:feasibledistribution}
    Let $\mathcal{M}=(S, s_0, A, \mdpsttr, \probtr, \L)$ be an MDPST and $(s, a)$ a state-action pair, where $a \in A(s)$. $\mathfrak{h}_s^a\in \distr(S)$ is a \emph{feasible distribution} of $(s, a)$, denoted by $s \xrightarrow[]{a} \mathfrak{h}_s^a$, if 
    \begin{itemize}
        \item[(i)] 
        $\sum_{s'\in \Theta} \alpha_{s'}^{\Theta} =1, \mbox{ for } \Theta\in \mdpsttr(s, a)$;
        \item[(ii)] $\mathfrak{h}_x^a(s'){=}\sum\limits_{\Theta\in \mdpsttr(s, a)}\iota^{\Theta}_{s'}\alpha_{s'}^{\Theta} \probtr(s, a, \Theta), 
        \mbox{ for } s'{\in} \post_{\M}(s, a)$; 
        \item[(iii)] $\mathfrak{h}_s^a(s')=0, \mbox{ for } s'\in S\setminus \post_{\M}(s, a)$.
    \end{itemize}  
\end{definition}

A \emph{nature} is defined to characterize the unquantifiable uncertainty in MDPSTs, motivated by the definition of \emph{nature} in robust MDPs~\cite{nilim2005robust}.
One can think of nature as the strategy controlled by the adversarial environment.

\begin{definition}[Nature for MDPSTs \cite{yu2024trembling}] 
A \emph{nature} of an MDPST is a function
$\gamma: \fpaths \times A \to \distr(S)$ such that  $\gamma(\xi, a)\in \mathcal{H}_s^a$ for $\xi\in \fpaths$ and $a\in A({\ls}(\xi))$, where $\H_s^a$ is the set of feasible distributions of $(s,a)$.
\end{definition}

Suppose we fix a nature $\gamma$. The probability of an agent strategy $\sigma$ satisfying an \LTL specification $\varphi$ is denoted by 
\begin{equation*}
    \prob_{\M}^{\sigma, \gamma}(\varphi) := \prob_{\M}(\{\xi\in \Omega_{\sigma, \gamma}^\M(s_0) \mid \L(\xi) \models \varphi\}),
\end{equation*}
where $\Omega_{\sigma, \gamma}^\M(s_0)$ is the set of all probable paths generated by the agent strategy $\sigma$ and nature $\gamma$ from initial state $s_0$. 

Similarly to Definitions 7-8 in \cite{yu2024trembling}, we now define (optimal) robust strategies for MDPSTs with \LTL specifications, rather than \LTLf, which account for all possible natures.  
Of course, given a fixed strategy $\sigma$ and a nature $\gamma$, one can deduce a Markov chain $\M_{\sigma, \gamma} $ from $\M$.
 
\begin{definition}[Robust strategy]\label{RobustSat}
    Let $\M$ be an MDPST, $\varphi$ an \LTL formula, and $\beta \in [0,1]$ a threshold. An agent strategy $\sigma$ \emph{robustly enforces} $\varphi$ in ${\mathcal{M}}$ wrt $\beta$ if, for every nature $\gamma$, the probability of generating paths satisfying $\varphi$ in $\M$ is no less than $\beta$, that is, $P_{{\mathcal{M}}}^{\sigma} (\varphi)\ge \beta$, where $P_{{\mathcal{M}}}^{\sigma} (\varphi):={\min}_{\gamma}\{{\rm Pr}_{{\mathcal{M}}}^{\sigma, \gamma}(\varphi)\}.$
    Such a strategy $\sigma$ is referred to as a robust strategy for $\M$~(with respect to $\beta$).
\end{definition}

\begin{definition}[Optimal robust strategy]
    An optimal strategy $\sigma^*$ that robustly enforces an \LTL formula $\varphi$ in an MDPST $\M$ is given by  $\sigma^*=\arg{\max}_{\sigma}\{{\rm Pr}_{{\mathcal{M}}}^{\sigma} (\varphi)\}.$
     In this case, $\sigma^*$ is referred to as an \emph{optimal robust strategy} for $\M$.
\end{definition}

The optimal robust strategy synthesis problem considered in this paper is formulated as follows.

\begin{problem}[Optimal Robust Strategy Synthesis]\label{Prom1}
Given an MDPST ${\mathcal{M}}$ 
and an \LTL formula $\varphi$, synthesise an optimal robust strategy $\sigma^*$.
\end{problem}

\section{Solution technique}

We now introduce our solution to Problem~\ref{Prom1} for a given MDPST $\M$ and \LTL formula $\varphi$. Our approach is based on a reduction to the reachability problem, but is more challenging than for MDPs because of unquantifiable uncertainty, and than for \LTLf because of the need to consider infinite paths. It contains 3 steps.
\begin{itemize}
    \item[1)] Construct the product $\M^{\times}$ of the MDPST $\M$ and the LDBA $\aut$ derived from the \LTL specification $\varphi$.
    \item[2)] Reduce Problem \ref{Prom1} to a reachability problem over the product MDPST $\M^{\times}$. This step presents a significant challenge; we address it by introducing the concept of a Winning Region for MDPSTs, along with a novel algorithm for computing it.
    \item[3)] Synthesise the strategy of the reachability problem over the product $\M^{\times}$. 
\end{itemize}
We will introduce each step in detail below.

\subsection{Product MDPST}

As mentioned before, we first obtain an LDBA $\A= (Q, q_{0}, \Sigma, \trans = \trans_i\cup\trans_j\cup \trans_{acc}, \acc)$ from the LTL formula $\varphi$ using  state-of-the-art tools such as Rabinizer 4~\cite{kvretinsky2018rabinizer}.
We prefer LDBAs over DRAs because nondeterministic LDBAs are usually smaller than DRAs~\cite{sickert2016limit}.
This thus yields smaller product MDPSTs and smaller strategies.
Then, we construct the product MDPST $\M^\times$ of the MDPST $\M=(S, s_0, A,  \mdpsttr, \probtr, \L)$ and the LDBA $\A$ as follows.

\begin{definition}[Product MDPST]\label{productautomaton2}
A product MDPST is a tuple $\mathcal{M}^{\times} =(S^\times, s^\times_{0}, A^{\times}, \mdpsttr^{\times}, \probtr^{\times}, \L^{\times}, \acc^\times)$, where $S^\times=S\times Q$ is the set of states, $s^\times_{0}=(s_0, q_{0})$ is the initial state, and
\begin{itemize}
\item $A^{\times}=A\cup A^\epsilon$ where $ A^\epsilon:=\{\epsilon_q\mid q\in Q\}$;
\item $\mdpsttr^{\times}: S^\times \times A^{\times} \Mapsto 2^{2^{S^\times}}$ is the set-valued nondeterministic transition function. For every $a \in A^{\times} $, $ (s, q) \in S^{\times}$, we define $\Theta^{\times} \in \mdpsttr^{\times}((s, q), a)$ as follows:
\begin{itemize}
    \item[i)] if $a \in A$, let $q'= \delta(q, \L(s))$, and define $\Theta^{\times} = \{(s', q') \mid s'\in \Theta\}$, for every $\Theta \in \mdpsttr(s, a)$;
    \item[ii)] otherwise $ a = \epsilon_{q'}\in A^\epsilon$, then for every $q' \in \trans_j(q, \epsilon)$, define $\Theta^{\times} = \{ (s, q')\}$.
\end{itemize}
    \item $\probtr^{\times}: S^\times \times A^{\times} \times 2^{S^\times} \mapsto [0, 1]$, where
\begin{itemize}
    \item $\probtr^{\times}((s, q), a, \Theta^{\times}) = \probtr(s, a, \Theta)$, if $a\in A(s), \Theta^{\times} = \Theta \times \{q'\} $ for $\Theta \in \mdpsttr(s, a)$ where $q' = \delta(q, \L(s))$;
    \item $\probtr^{\times}((s, q), a, \Theta^{\times}) = 1$ if $a\in A^\epsilon, \Theta^{\times} = \{(s, q')\}$ for some $q' \in \trans_j(q, \epsilon)$ with $a = \epsilon_{q'}$,
    \item $\probtr^{\times}((s, q), a, \Theta^{\times}) = 0$, otherwise.
\end{itemize}
  \item $\L^{\times}: S^\times \to 2^{Prop}$, where $\L^{\times}((s, q))=\L(s)$;
  \item $\acc^\times=\{(s, q)\in S^{\times} \mid q\in \acc\}$.
\end{itemize}

An infinite path $\xi^\times$ of $\M^\times$ satisfies the B\"{u}chi condition if $\infset(\xi^\times)\cap {\rm Acc}^\times \neq \emptyset$. Such a path is said to be accepting.
\end{definition}

When an MDPST action $a\in A$ is taken in the product MDPST $\M^\times$, the alphabet used to transition the LDBA is deduced
by applying the proposition labelling function to the current
MDP state: $\L(s)\in 2^{\prop}$. In this case, the LDBA transition
$\delta(q, \L(s))$ is deterministic. Otherwise, if an $\epsilon$-transition $\epsilon_{\hat q}\in \{\epsilon_q\mid q\in Q\}$ is taken, the LDBA selects an $\epsilon$-transition, and the nondeterminism of $\trans_j(q, \epsilon)$ is resolved by transitioning the automaton state to $\hat q$. 


\subsection{Reduction to a reachability problem}

Conventional planning approaches for MDPs against \LTL specification require  computing \emph{maximal end components} (MECs) in the product and determining which MECs are accepting.
These MECs are then regarded as the goal states in the reachability planning problems. 
For MDPSTs, however, this approach is not  applicable. To better understand this, let's first review the definition of (maximal) end-components ((M)EC) for MDPs.

\begin{definition}[EC for MDPs \cite{de1998formal,baier2008principles}]
An \emph{end component (EC)} of an MDP $\M$ is a sub-MDP $\M'$ of $\M$ such that its underlying graph is strongly connected. A \emph{maximal EC (MEC)} is maximal under set inclusion.
\end{definition}

\begin{lemma}[EC properties for MDPs. Theorems 3.1 and 4.2 of \cite{de1998formal}]\label{lem:mdp-mec}
    Once an end component $E$ of an MDP $\M$ is entered, there is a strategy that i) visits every state-action pair 
    in $E$ infinitely often with probability 1, and ii) stays in $E$ forever.
\end{lemma}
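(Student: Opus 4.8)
The plan is to exhibit a single strategy $\sigma$ on $\M$ that simultaneously keeps the play inside $E$ and schedules every state-action pair of $E$ for infinitely many guaranteed visits. Write $E$ as a sub-MDP with state set $T$ and, for each $s \in T$, a nonempty set of permitted actions $A'(s) \subseteq A(s)$; by the definition of an EC these satisfy the \emph{closure} property that every successor of $(s,a)$ with $a \in A'(s)$ lies again in $T$, together with the \emph{strong connectivity} of the induced graph.

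First I would dispose of claim (ii). If $\sigma$ only ever selects actions from $A'(\ls(\xi))$ at the current state, then closure guarantees $s_{i+1} \in T$ whenever $s_i \in T$; hence, once $E$ is entered, every path in $\Omega_\sigma^{\M}(s)$ stays in $T$ with probability $1$. So it suffices to build the desired scheduling strategy using only EC actions.

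For claim (i) I would use a round-robin construction with finite memory. Enumerate the state-action pairs of $E$ as $(s_1,a_1),\dots,(s_N,a_N)$. The strategy proceeds in \emph{phases}: in phase $n$ it targets the pair $(s_j,a_j)$ with $j = ((n-1) \bmod N)+1$, navigates to $s_j$ using a memoryless sub-strategy $\pi_{s_j}$ that employs only EC actions, plays $a_j$ once $s_j$ is reached, and then advances to phase $n+1$. The key quantitative input is a uniform lower bound: by strong connectivity, from any state of $T$ the target $s_j$ is reachable along EC edges within at most $|T|$ steps, so under $\pi_{s_j}$ the probability of reaching $s_j$ within any window of $|T|$ steps is at least $\eta := p_{\min}^{|T|} > 0$, where $p_{\min}$ is the least positive transition probability occurring in $E$. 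Consequently the probability that phase $n$ has not reached its target after $k$ such windows is at most $(1-\eta)^k$, which tends to $0$; hence each phase terminates after finitely many steps almost surely.

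Finally I would chain the phases. Since each phase ends almost surely, with probability $1$ infinitely many phases complete, and because the target cycles through all $N$ pairs, every pair $(s_j,a_j)$ is designated—and therefore played—in infinitely many phases. A second Borel--Cantelli style argument, fed by the per-window bound $\eta$ above, then yields that every state-action pair of $E$ is visited infinitely often with probability $1$. The main obstacle is precisely this almost-sure recurrence: probabilistic branching means the navigation routes cannot be followed deterministically, so the argument hinges on replacing ``follow a fixed path'' by the adaptive sub-strategies $\pi_{s_j}$ together with the uniform window bound $\eta$, which upgrades reachability-with-positive-probability to reachability-almost-surely via the geometric tail estimate.
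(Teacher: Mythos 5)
Your proof is correct: the round-robin construction (cycle through the state-action pairs of $E$, navigate to each target with a memoryless reachability sub-strategy confined to EC actions, and use the uniform window bound $\eta = p_{\min}^{|T|}$ to upgrade positive-probability reachability to almost-sure phase termination) is exactly the standard argument behind this classical result. The paper itself does not prove the lemma --- it imports it by citation from de Alfaro's Theorems 3.1 and 4.2 --- so there is nothing to contrast with; your argument matches the canonical proof from that source, with the only cosmetic quibble being that the final ``second Borel--Cantelli'' step is unnecessary once every phase is known to terminate almost surely.
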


Lemma~\ref{lem:mdp-mec} makes it possible to reduce a planning problem over MDPs with \LTL objectives to a reachability problem over the product MDP. This is due to the fact that, whenever a state 
$s^\times$ of an accepting MEC 
$E$ of the product MDP $\M^\times$ is reached, there exists a strategy of $\M^\times$ starting from $s^\times$ that ensures every state in 
$E$ (including the accepting states) will be visited infinitely often (according to Lemma \ref{lem:mdp-mec}), thereby satisfying the LTL objective.

For EC decomposition, MDPs can be seen as directed graphs where each state corresponds to a node and each action-labelled (probabilistic) transition corresponds to an edge.
However, this approach cannot be directly applied
to MDPSTs where transitions lead to set-valued successors rather than individual states.
For instance, consider a set-valued transition $\Theta^\times = \{s', s'', s'''\}\in \F^\times(s^\times, a)$ in a product MDPST $\M^\times$.
Here it is not sufficient to add edges for all pairs $(s^\times, s'), (s^\times, s''), (s^\times, s''')$, as the adversarial nature may prevent reaching some states in $\Theta^\times$ from $s^\times$. This poses a significant challenge in identifying the set of states in the product MDPST $\M^\times$ that are guaranteed to visit the set of accepting states $\acc^\times$ infinitely often with probability 1, an essential step in reducing the LTL planning problem to a reachability problem.
To address this challenge, we propose a procedure for identifying a set of states, called the Winning Region, in an MDPST that are guaranteed to visit a set of accepting states infinitely often with probability 1, defined formally below.

\begin{definition}[Winning Region for MDPSTs]\label{def:WR}
Given an MDPST $\M = (S, s_0, A, \mdpsttr, \probtr, \L)$ with a set of accepting states $\acc \subseteq S$, we say a set of states $W \subseteq S$ is a Winning Region (WR) for the MDPST $\M$ if, for every state $s\in W$, there exists a strategy $\sigma(s)$ starting from $s$ such that $\prob_{{\mathcal{M}}}^{\sigma(s)}(\square\diamondsuit \acc) = 1$.
\end{definition}

Next, we propose an algorithm for computing the WR $W^\times$ of the product MDPST $\M^\times$, which is outlined in Algorithm 1. The algorithm consists of the following steps.

\begin{algorithm}[t]
\caption{\textit{Compute winning region for MDPST}}
\begin{algorithmic}[1]
\Require Product MDPST $\M^\times$.
\Ensure Winning region $W^\times$.
\State Compute $S_p$ and construct sub-MDPST $\M_{sub}^\times = (S_p, s^\times_0, A^\times, \F_p, \mathcal{T}_p, \L_p, \acc^\times)$;
\State flag $= 1$;
\While {flag $= 1$ and $\acc^\times \neq \emptyset$}
\State Split ${\acc}^\times$ into two virtual copies $I_{in} = \{s^{in}: \text{$s^{in}$ is a virtual copy of $s, \forall s\in \acc^\times$}\}$ and $I_{ out}\{s^{in}: \text{$s^{out}$ is a virtual copy of $s, \forall s\in \acc^\times$}\}$;
\State $\hat{S} =(S_p \setminus {\acc}^\times)\cup I_{\rm in}\cup I_{\rm out}$;
\State Construct the MDPST $\hat{\M}_{sub}^\times$ (cf. (\ref{def:hat_mdpst})) over $\hat{S}$;
\State Compute the optimal value function $V_{sat}$ for $\hat{\M}_{sub}^\times$ with the robust dynamic programming operator $T$ in (\ref{VF_deterministic});
\If {$\exists s^{out}\in I_{out}$ s.t. $V_{sat}(s^{out})\neq 1$ }
\State flag $\leftarrow 1$;
\State Update $S_p$ and $\acc^\times$;
\Else
\State flag $\leftarrow 0$;
\EndIf
\EndWhile
\State $W^{\times} = S_p$.
\end{algorithmic}
\end{algorithm}


First, we introduce an optimisation that computes a sub-MDPST $\M^\times_{sub}$ of $\M^\times$, which includes only states that are~(forward) reachable from the initial state $s^\times_0$ and~(backward) reachable from the set of accepting states ${\acc}^\times$ (line 1). Denote by i) $S_p\subseteq S^\times$ the set of states that can be reached from both the initial and accepting states and ii) $\M^\times_{sub} = (S_p, s^\times_0, A^\times, \F_p, \mathcal{T}_p, \L^\times)$ the sub-MDPST constructed from $\M^\times$ with respect to $S_p$ (an algorithm for computing $S_p$ and $\M^\times_{sub}$ can be found in \cite{yu2024trembling}). 

Second, we iteratively remove states in $S_p$ that cannot visit $\acc^\times$ infinitely often with probability 1 (lines 2-20). Before starting the iteration, a flag is set to 1 (line 2), indicating that the iteration should proceed. Each iteration begins by splitting the set of accepting states ${\acc}^\times$ into two virtual copies: i) $I_{in}$, which only has incoming transitions into
${\acc}^\times$, and ii) $I_{out}$, which only has outgoing transitions from ${\acc}^\times$ (line 4). 
Then a new state space can be defined as $\hat{S}:=(S_p \setminus {\acc}^\times)\cup I_{in}\cup I_{out}$ (line 5).

Over $\hat S$, we can construct a new product MDPST 
\begin{equation}\label{def:hat_mdpst}
    \hat{\mathcal{M}}^\times_{sub}=(\hat{S}, s^\times_{0}, \hat{A}^{\times}, \hat{\mathcal{F}}^{\times},  \hat{\mathcal{T}}^{\times}, \hat{\mathcal{L}}^{\times})
\end{equation}
which is equivalent to ${\M}^\times_{sub}$ (line 6). For each copy $s^{\rm in}\in I_{in}$ of an accepting state $s\in \acc^\times$, we assign only a self-loop transition. As a result, each time $s^{\rm in}$ is visited, it will be visited infinitely often. 
The detailed construction of $\hat{\mathcal{M}}^\times_{sub}$ can be found in Appendix A.

We now give a robust value iteration algorithm \cite{nilim2005robust} for computing the robust maximal probability of reaching $I_{in}$ from each state $s^\times \in \hat{S}$ (line 7). Define a value function $V_{sat}: \hat{S} \to \mathbb{R}_{\ge 0}$, where
\begin{equation*}\label{valuefunction_MDPST}
\begin{aligned}
    V_{sat}(s^\times)=&\max_{\sigma^\times(s^\times)}\min_{\gamma^\times}\big\{  \\
    &\hspace{-8mm}{\rm Pr}_{\hat{\mathcal{M}}^\times_{sub}}(\{\xi^\times\in \Omega_{\sigma^\times(s^\times), \gamma^\times}^{\hat{\mathcal{M}}^\times_{sub}}(s^\times) \mid \hat{\L}^{\times}(\xi^\times) \models \diamondsuit I_{in}\})\big\},
\end{aligned}    
\end{equation*}
which represents the robust maximal probability of reaching $I_{in}$ from $s^\times$. Then one can get that $V_{sat}(s^\times)=1, \forall s^\times \in I_{in}$.

It was shown in \cite{trevizan2007planning,yu2024trembling} that a simplified Bellman equation exists for MDPSTs. Therefore, for $s^\times\in \hat{S}\setminus I_{in}$,  
the robust dynamic programming operator $T$ can be designed as
\begin{equation}\label{VF_deterministic}
\begin{aligned}
    T(V_{sat})(s^\times)
    =\max_{a\in \hat{A}^\times(s^\times)}\Big\{&\sum_{\Theta\in \hat{\mdpsttr}^{\times}(s^\times, a)}\hat{\probtr}^\times(s^\times, a, \Theta) \\
    & \min_{s'\in \Theta}\{V_{sat}(s')\}\Big\}.
    \end{aligned}
\end{equation}

Once the robust value iteration converges and thus the optimal value function $V_{sat}$ is obtained, we first check whether there exists a state $s^{out} \in I_{out}$ such that $V_{sat}(s^{out}) \neq 1$. If such a state exists, we set flag to 1, and then remove the corresponding state $s$ from both $S_p$ and $\acc^\times$ (lines 8-10). Otherwise, the flag is set to 0 (11-12). The iteration continues if the flag is 1 and terminates once flag becomes 0 or $\acc^\times = \emptyset$. Once the iteration terminates, the algorithm returns the WR $W^\times$ (line 15). Our main result then follows.

\begin{theorem}\label{thm1}
    Given an MDPST ${\mathcal{M}}$ and an \LTL formula $\varphi$, the maximal probability of satisfying $\varphi$ is given by
    \begin{equation}\label{thm:correctness}
    \begin{aligned}
        \max_{\sigma\in \Pi_{\M}}\{\prob_{{\M}}^\sigma(\varphi)\}
        = \max_{{{\sigma^\times}}\in {\Pi_{{\M}^\times}}} \{\prob_{{{\M}}^\times}^{{{\sigma^\times}}}( \diamondsuit W^\times)\},
    \end{aligned}       
    \end{equation}
    where $W^\times$ is the WR computed by Algorithm 1.
\end{theorem}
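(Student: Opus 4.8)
The plan is to interpolate the robust probability of the B\"uchi acceptance condition of the product and to prove the two equalities
\begin{equation*}
\max_{\sigma\in\Pi_{\M}}\{\prob_{\M}^{\sigma}(\varphi)\}
\;\overset{(\mathrm{I})}{=}\;
\max_{\sigma^\times\in\Pi_{\M^\times}}\{\prob_{\M^\times}^{\sigma^\times}(\square\diamondsuit\acc^\times)\}
\;\overset{(\mathrm{II})}{=}\;
\max_{\sigma^\times\in\Pi_{\M^\times}}\{\prob_{\M^\times}^{\sigma^\times}(\diamondsuit W^\times)\},
\end{equation*}
where every value is the robust one obtained by minimising over all natures. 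For equality~$(\mathrm{I})$ I would first set up a trace-preserving correspondence between the paths of $\M$ and those of $\M^\times$: each path $\xi^\times$ of $\M^\times$ projects onto a path $\xi$ of $\M$ with $\L^\times(\xi^\times)=\L(\xi)$, while the second component records a run of the LDBA $\aut$ on that trace. The only nondeterminism of $\aut$ lies in the $\epsilon$-jumps $\trans_j$ (Definition~\ref{def:ldba}), and in $\M^\times$ these are placed under the agent's control through the actions $A^\epsilon$; the $Q_i$- and $Q_{acc}$-transitions are deterministic, and the nature acts only on the MDPST component, with feasible distributions over $\Theta^\times=\Theta\times\{q'\}$ in one-to-one correspondence with those over $\Theta$ in $\M$. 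Since $\lang(\aut)=\lang(\varphi)$, a path is B\"uchi-accepting (i.e.\ $\infset(\xi^\times)\cap\acc^\times\neq\emptyset$) exactly when its embedded LDBA run is accepting, which for a suitable scheduling of the jumps holds iff $\L(\xi)\models\varphi$. Matching an agent strategy $\sigma$ on $\M$ with a strategy $\sigma^\times$ on $\M^\times$ that replays $\sigma$ and additionally schedules the jumps, and noting that natures transfer verbatim, yields equality of the two robust optima; the only subtlety is that the LDBA of \cite{sickert2016limit} is suitable for quantitative analysis, so the agent can realise the optimum through its $\epsilon$-choices.

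Equality~$(\mathrm{II})$ is the reachability reduction, and I would prove it by two inequalities, both resting on an auxiliary lemma asserting that Algorithm~1 returns the \emph{maximal} WR, i.e.\ $W^\times$ contains \emph{every} state from which some strategy robustly enforces $\square\diamondsuit\acc^\times$ with probability~$1$. Soundness of $W^\times$ follows from the self-loops on $I_{in}$ together with convergence of the robust value iteration~(\ref{VF_deterministic}): upon termination $V_{sat}(s^{out})=1$ for every out-copy, so from each retained accepting state one robustly reaches an accepting state again with probability~$1$, and iterating this ``robust return'' forever yields $\square\diamondsuit\acc^\times$ almost surely against every nature. Completeness follows from the greatest-fixpoint structure of the while-loop, since each deleted state provably fails the robust recurrence test. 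Granting this lemma, the $(\geq)$ direction of~$(\mathrm{II})$ is immediate: concatenating a strategy that reaches $W^\times$ with probability~$p$ with the WR strategy of Definition~\ref{def:WR} produces B\"uchi-accepting behaviour with probability at least~$p$.

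For the $(\leq)$ direction of~$(\mathrm{II})$, which I expect to be the main obstacle, I would analyse the Markov chain induced by a fixed product strategy $\sigma^\times$ together with a worst-case nature $\gamma^\times$, and show that almost every B\"uchi-accepting path eventually enters $W^\times$. The classical MDP argument behind Lemma~\ref{lem:mdp-mec}---that the set of states visited infinitely often is almost surely an end component, and an accepting one lies inside an accepting MEC---does not transfer directly, because set-valued transitions and the adversarial nature break end-component decomposition (as already illustrated in the discussion preceding Definition~\ref{def:WR}). Instead I would establish a robust analogue: under any strategy, the states visited infinitely often against the worst-case nature almost surely form a robustly closed recurrent set, and if such a set meets $\acc^\times$ infinitely often with probability~$1$ it must be contained in the maximal WR, hence in $W^\times$ by the auxiliary lemma. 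This bounds the robust probability of the B\"uchi condition by the robust probability of reaching $W^\times$, closing the chain.

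The delicate point throughout the $(\leq)$ direction is the interchange of $\max_{\sigma^\times}$ and $\min_{\gamma^\times}$ while reasoning about limit (recurrence) behaviour rather than finite-horizon reachability. I would control this using the simplified Bellman structure~(\ref{VF_deterministic}), which guarantees that memoryless robust optima suffice and that the $\max$--$\min$ value is well defined; this lets me treat the worst-case nature as a fixed response on the retained state space and reduce the recurrence analysis to the probability-$1$ reachability of $I_{in}$ that Algorithm~1 already certifies, thereby making the robust BSCC-style argument rigorous and completing the proof of Theorem~\ref{thm1}.
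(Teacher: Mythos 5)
Your proposal is correct and follows essentially the same route as the paper's proof: the product correspondence with agent-controlled $\epsilon$-jumps scheduled at the moment the induced Markov chain enters a BSCC, the soundness of $W^\times$ via the $I_{in}/I_{out}$ splitting and repeated probability-one returns to accepting states, and the argument that the set of states visited infinitely often by any accepting run must survive Algorithm~1's pruning and hence lie in $W^\times$. The only organizational difference is that you factor the argument through the intermediate quantity $\max_{\sigma^\times}\prob_{\M^\times}^{\sigma^\times}(\square\diamondsuit\acc^\times)$ and a standalone maximality lemma for $W^\times$, whereas the paper folds both into the $\leq$ direction by arguing directly that a recurrent accepting state of $\M_{\sigma,\gamma}$ is never removed by the while-loop; the substance is the same, and you correctly flag the same delicate points (the max--min interchange and the failure of end-component decomposition for set-valued transitions) that the paper likewise treats only informally.
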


\begin{proof}[Proof Sketch]
    We prove Theorem \ref{thm1} in two steps. First, we show the correctness of Algorithm 1, i.e., that the output $W^\times$ of Algorithm 1 is indeed the WR of the product MDPST $\M^\times$. Then, we show that (\ref{thm:correctness}) holds by verifying both sides of the inequality and subsequently constructing the induced policy on $\M$. 
\end{proof}

An example illustrating the steps of Algorithm 1 are provided in Appendix B. The full proof of Theorem \ref{thm1} can be found in Appendix C.



\subsection{Optimal robust strategy synthesis}\label{Sec:ors}

We have thus reduced Problem \ref{Prom1} to the reachability problem over $\M^\times_{sub}$, where the goal set is given by the WR $W^\times$. 
For states $s^\times \in W^\times,$ it holds that $V_{sat}(s^\times)=1$. For states $s^\times \in S_p\setminus W^\times$, the optimal value function $V_{sat}$ can be determined by conducting another run of the robust value iteration algorithm (\ref{VF_deterministic}).
The optimal robust strategy $\sigma^\times$ can be derived from $V_{sat}$ using standard methods.

\section{Experiments}

In this section, a case study is provided to demonstrate the effectiveness of our method. 
We implemented the solution technique proposed in Section IV in Python, and use Rabinizer 4 \cite{kvretinsky2018rabinizer} for the LTL-to-LDBA construction. For the robust value iteration, we set the convergence threshold to $10^{-3}$, i.e., the value iteration stops when $\max_{s\in S_l}\{|V^{k+1}_{sat}(s)-V^{k}_{sat}(s)|\}< 10^{-3}$. 
All simulations are carried out on a Macbook Pro (2.6 GHz 6-Core Intel Core i7 and 16 GB of RAM) and the implementation
code can be found at: \href{https://github.com/piany/MDPST-full-LTL}{https://github.com/piany/MDPST-full-LTL}.

We consider a mobile robot moving in the hexagonal world described in Example \ref{example2}, where the size of the workspace is denoted by $(N_x, N_y)$.
As explained, the robot dynamics can be abstracted as an MDPST.  The robot is required to persistently survey three goal regions while avoiding obstacles at all times. This task is expressed as the \LTL formula 
\begin{equation*}
\begin{aligned}
   & \varphi_{persistavoid}=\\
    & \hspace{3mm}(\square\lozenge \texttt{b1} \vee \texttt{b2}) \wedge (\square\lozenge \texttt{b3}) \wedge (\square\lozenge \texttt{b4} \vee \texttt{b5}) \wedge (\square\neg \texttt{obs}).
    \end{aligned}
\end{equation*}

The corresponding LDBA derived using Rabinizer 4 has 4 states. For the scenario 
$(N_x, N_y)= (10, 5)$, the constructed product MDPST  
$\mathcal{M}^\times$ has 800 states and 5440 (single and set-valued) edges. The WR $W^\times$ is computed using Algorithm 1, which has 509 states. The initial state of the robot is $(q_1, N)$ and one can compute that $\max\{{\rm Pr}_{\mathcal{M}^\times} (\varphi_{persistavoid})\}=0.85$. The (MDPST, LDBA, and product MDPST) model construction took in total 0.467s and the strategy synthesis took 9.144s.

To highlight the computational advantage of LDBA over DRA, we also consider DRA as representations for the LTL task $\varphi_{persistavoid}$. The resulting DRA has 8 states, whereas the LDBA has only 4 states. 
We compare the performance of both representations in three scenarios: $(N_x, N_y) = (10, 5), (N_x, N_y) = (16, 8)$, and $(N_x, N_y) = (20, 10)$.
TABLE I shows the number of states and transitions  $(|S^\times|, |\mathcal{T}^\times|)$ of the product MDPST $\mathcal{M}^\times$, along with the model construction time $T_{mdl}$, and the strategy synthesis time $T_{sys}$ for each scenario $(N_x, N_y)$. The results clearly show that strategy synthesis with LDBA is significantly faster than with DRA, particularly as the workspace size increases.

\begin{table}[]
\caption{The number of states and transitions  $(|S^\times|, |\mathcal{T}^\times|)$ of the product MDPST $\mathcal{M}^\times$, the model construction time $T_{mdl}$, and the strategy synthesis time $T_{sys}$ for different automation choices $\mathcal{A}$ and different scenarios $(N_x, N_y)$.}
\begin{tabular}{c|c|c|c|c}
\hline
$(N_x, N_y)$              &  $\mathcal{A}$    & $(|S^\times|, |\mathcal{T}^\times|)$  & $T_{mdl} (s)$ & $T_{sys} (s)$ \\ \hline
\multirow{2}{*}{(10, 5)}  & LDBA & (800, 5440)                                                         & 0.888     & 10.745      \\ \cline{2-5} 
                          & DRA  & (1600, 10880)                                                       & 0.986      & 17.363     \\ \hline
\multirow{2}{*}{(16, 8)}  & LDBA & (2048, 14344)                                                       & 0.975       & 88.529     \\ \cline{2-5} 
                          & DRA  & (4098, 28688)                                                       & 1.696       & 231.343    \\ \hline
\multirow{2}{*}{(20, 10)} & LDBA & (3200, 22728)                                                       & 2.058       & 288.662    \\ \cline{2-5} 
                          & DRA  & (6400, 45456)                                                       & 1.962       & 748.126   \\ \hline
\end{tabular}
\end{table}



\begin{figure}
\centering
\includegraphics[width=0.42\textwidth]{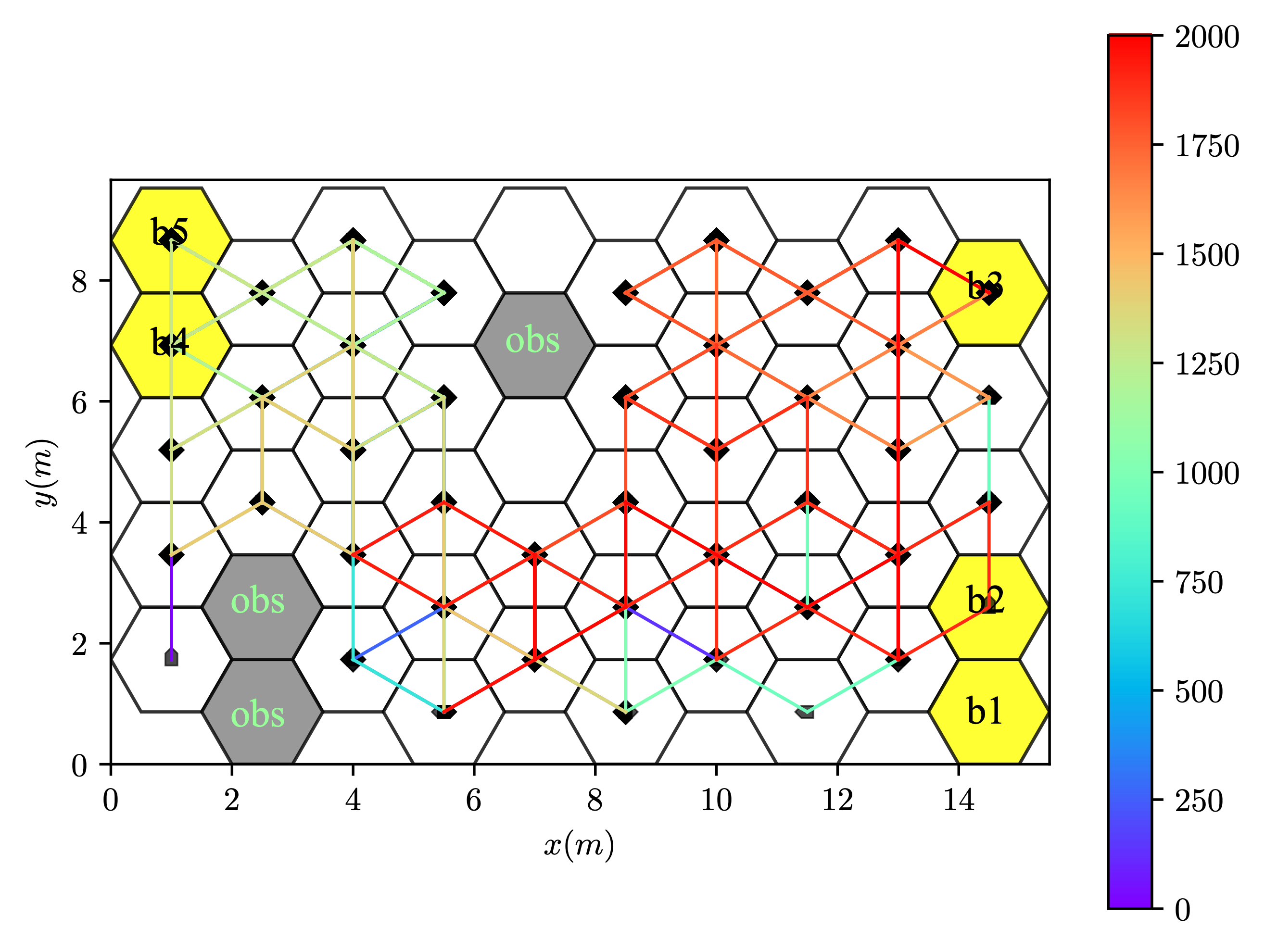}
	\caption{\footnotesize  Simulated trajectory of 2000 time steps for the LTL task $\varphi_{persistavoid}$, where the color bar denotes the time steps. }
	\label{Fig:gridworld_obs}
\end{figure}

To verify robustness, we perform
1000 Monte Carlo simulations of 2000 time steps for scenario $(N_x, N_y)= (10, 5)$. For each simulation, we randomly choose the set of parameters $\{\alpha_{s'}^{\theta}: \theta\in \mdpsttr^{\times}(s, a), s'\in \theta\}$ for each state-action pair $(s, a)$, to resolve the uncertainty for the set-valued transitions. We adopt
the optimal robust strategy for the strategy prefix and the Round-Robin strategy once the system enters the WR. The task $\varphi_{persistavoid}$ is satisfied 868 times out of the 1000 simulations, which verifies the probabilistic satisfaction guarantee.  Fig. \ref{Fig:gridworld_obs} depicts one of the simulated trajectories. One can see that the LTL specification $\varphi_{persistavoid}$ is satisfied.

\section{Conclusion}

This work studied the robot planning problem under both quantifiable and unquantifiable uncertainty, and subject to high-level \LTL task specifications. MDPSTs were proposed as a unified modelling framework for handling both types of uncertainties.  Additionally, a sound solution technique was introduced for synthesising the optimal robust strategy for MDPSTs with \LTL specifications.
For future work, we plan to explore the plan synthesis problem for MDPSTs, subject to both temporal logic specifications
and cost constraints.

\bibliographystyle{IEEEtran}
\bibliography{references,ref}



\newpage

\section*{Appendix}

\subsection{Detailed construction of $\hat{\M}_{sub}^\times$}

Recall that we split the set of accepting states ${\acc}^\times$ into two virtual copies: i) $I_{in}$, which only has incoming transitions into
${\acc}^\times$, and ii) $I_{out}$, which only has outgoing transitions from ${\acc}^\times$. Then the new state-space can be defined as 
\begin{equation*}
    \hat{S}:=(S_p \setminus {\acc}^\times)\cup I_{in}\cup I_{out}
\end{equation*}

Over $\hat S$, one can further construct a new product MDPST 
\begin{equation}\label{def:hat_mdpst}
    \hat{\mathcal{M}}^\times_{sub}=(\hat{S}, s^\times_{0}, \hat{A}^{\times}, \hat{\mathcal{F}}^{\times},  \hat{\mathcal{T}}^{\times}, \hat{\mathcal{L}}^{\times}),
\end{equation}
where 
\begin{itemize}
    \item $\hat{A}^{\times}=A^\times\cup \{\tau_0\}$ and $\tau_0$ is a self-loop action. The set of available actions $\hat{A}^{\times}(s)$ is defined as $\hat{A}^{\times}(s)=A^\times(s), \forall s\in S_p \setminus \acc^\times$, $\hat{A}^{\times}(s^{\rm out})=A^\times(s), \forall s \in \acc^\times$, and $\hat{A}^{\times}(s^{\rm in})=\tau_0, \forall s\in \acc^\times$, where $s^{\rm out}$ and $s^{\rm in}$ are respectively the virtual copies of $s\in \acc^\times$ in $I_{in}$ and $ I_{out}$.
    \item $\hat{\F}^{\times}: \hat{S} \times \hat{A}^{\times} \Mapsto 2^{2^{\hat{S}^\times}}$ is the set-valued nondeterministic transition function. To define $\hat{\F}^{\times}$, we let $\Phi = \cup_{s\in S_p}\cup_{a\in {A}^{\times}} \cup_{\Theta\in \F_p(s, a)}\{\Theta\}$ be the set of all possible target (single- or set-valued) states originating from ${S}_p$. For each set $\Theta \in \Phi$, we define a \emph{copy} $\hat{\Theta}$ of $\Theta$ as i) $\hat{\Theta} = \Theta$ if $\Theta \subseteq S_p \setminus \acc^\times$, ii) $\hat{\Theta} = \{s^{\rm in}: s\in \Theta\}$ if $\Theta \subseteq \acc^\times$, and iii) $\hat{\Theta} = \{s: s\in \Theta \wedge s\in S_p \setminus \acc^\times\} \cup \{s^{\rm in}: s\in \Theta \wedge s \in \acc^\times\}$, otherwise. Then, one has that
    \begin{itemize}
        \item if $s\in S_p\setminus \acc^\times$, $\hat{\Theta} \in \hat{\F}^{\times}(s, a) \ \text{iff} \ \Theta \in \F_p(s, a)$;
        \item if $s^{\rm out}\in I_{out}$, $\hat{\Theta} \in \hat{\F}^{\times}(s^{\rm out}, a) \ \text{iff} \ \Theta \in \F_p(s, a)$;
        \item if $s^{\rm in}\in I_{in}$, $\hat{\F}^{\times}(s^{\rm in}, \tau_0) = s^{\rm in}$.
    \end{itemize}
   \item  $\hat{\mathcal{T}}^{\times}: \hat{S}\times \hat{A}^{\times} \times {2^{\hat{S}^\times}}\mapsto (0, 1]$ is the transition probability function, given by
   \begin{itemize}
       \item $\hat{\mathcal{T}}^{\times}(s, a, \hat{\Theta})= \mathcal{T}_p(s, a, {\Theta})$ for $s\in S_p\setminus \acc^\times$,
       \item $\hat{\mathcal{T}}^{\times}(s^{\rm out}, a, \hat{\Theta})= \mathcal{T}_p(s, a, {\Theta})$ for $s^{\rm out}\in I_{out}$,
       \item $\hat{\mathcal{T}}^{\times}(s^{\rm in}, \tau_0, s^{\rm in})= 1$ for $s^{\rm in}\in I_{in}$.       
   \end{itemize}
   \item $\hat{\L}^{\times}: \hat{S} \to 2^{Prop}$, where $\hat{\L}^{\times}(s)=\L^\times(s)$ if $s\in S_p\setminus \acc^\times$ and $\hat{\L}^{\times}(s^{\rm out})=\hat{\L}^{\times}(s^{\rm in})=\L^\times(s)$ otherwise.
\end{itemize}

Note that $\hat{\mathcal{M}}^\times_{sub}$ is equivalent to ${\M}^\times_{sub}$ in the sense that there exists a one-to-one correspondence between the transitions of $\hat{\mathcal{M}}^\times_{sub}$ and ${\M}^\times_{sub}$. 

\subsection{Example for demonstrating Algorithm 1}

\begin{example}
    Consider a product MDPST ${\M}^\times$ with the set of states $S^\times = \{S_1, \cdots, S_{10}\}$ and the set of actions $A^\times = \{a, b\}$, as shown in Fig. {\ref{fig:mdpst_v1}}. The initial state is $s_0^\times = S_1$ and the set of accepting states is given by $\acc^\times = \{S_4, S_5\}$ (marked in green). For the state $S_2$, both actions $a$ and $b$ are applicable and the associated (action labelled) probabilistic transitions are depicted. For the remaining states, only action $a$ is applicable and the associated  probabilistic transitions are depicted (where the action label is omitted). Notice that state $S_{10}$ is not backward reachable from the set of accepting states $\acc^\times$, therefore it follows that $S_p = \{S_1, \cdots, S_{9}\}$ and the resulting sub-MDPST ${\M}^\times_{sub}$ is shown within the shaded box of Fig. {\ref{fig:mdpst_v1}}.

    In the first iteration of Algorithm 1, one starts by splitting $\acc^\times$ into two virtual copies of $I_{in}= \{S_4^{in}, S_5^{in}\}$ and $I_{out}= \{S_4^{out}, S_5^{out}\}$. Then the new state space $\hat{S} = \{S_1, \cdots, S_3, S_4^{in}, S_5^{in}, S_4^{out}, S_5^{out}, S_6, \cdots, S_9\}$ and the constructed MDPST $\hat{\M}^\times_{sub}$ is drawn in Fig. \ref{fig:mdpst_v2}. Let $V_{sat}(S_4^{in})=V_{sat}(S_5^{in}) = 1$. With the robust dynamic programming operator $T$ in (\ref{VF_deterministic}), one can compute that $V_{sat}(S_1)=0.8, V_{sat}(S_2) = V_{sat}(S_3) = V_{sat}(S_4^{out}) = 1$, $V_{sat}(S_5^{out})= V_{sat}(S_6) = V_{sat}(S_7) = V_{sat}(S_8) = 0$, and $V_{sat}(S_9) = 0.94$. Therefore, we remove state $S_5$ from both $\acc^{\times}$ and $S_p$, and further remove states $S_1, S_6, S_7, S_8, S_9$ from $S_p$, and thus obtain $S_p = \{S_2, S_3, S_4\}$ and $\acc^{\times} = \{S_4\}$ for the second iteration.

    The product MDPST ${\M}^\times_{sub}$ for the second iteration is depicted in Fig. \ref{fig:WR}. By repeating the process in iteration 1, one can get that $V_{sat}(S_2) = V_{sat}(S_3) = V_{sat}(S_4^{in}) = V_{sat}(S_4^{out}) = 1$. Thus, one has that flag$=0$. The iteration terminates. The WR is then given by $W^\times = \{S_2, S_3, S_4\}$.
\end{example}

\begin{figure}[ht]
\centering
\begin{tikzpicture}

\fill[gray!20] (0, -5.7) rectangle (8.5, 2.2);
\node[circle, draw, fill=blue!20, minimum width=1cm, minimum height=0.5cm] (S1) at (0.5, 0) {$S_1$};
\node[circle, draw, fill=blue!20, minimum width=1cm, minimum height=0.5cm] (S2) at (3, 0) {$S_2$};
\node[circle, draw, fill=blue!20, minimum width=1cm, minimum height=0.5cm] (S3) at (5, 1.5) {$S_3$};
\node[circle, draw, fill=green!50, minimum width=1cm, minimum height=0.5cm] (S4) at (7.5, 0) {$S_4$};
\node[circle, draw, fill=green!50, minimum width=1cm, minimum height=0.5cm] (S5) at (7.5, -1.5) {$S_5$};
\node[circle, draw, fill=blue!20, minimum width=1cm, minimum height=0.5cm] (S6) at (5.75, -5) {$S_6$};
\node[circle, draw, fill=blue!20, minimum width=1cm, minimum height=0.5cm] (S7) at (5.75, -1.5) {$S_7$};
\node[circle, draw, fill=blue!20, minimum width=1cm, minimum height=0.5cm] (S8) at (3, -2.5) {$S_9$};
\node[circle, draw, fill=blue!20, minimum width=1cm, minimum height=0.5cm] (S9) at (3, -4) {$S_8$};
\node[circle, draw, fill=blue!20, minimum width=1cm, minimum height=0.5cm] (S10) at (1.5, 3) {$S_{10}$};

\draw[->, thick] (S1) -- node[midway, left, blue] {$0.2$} (S10); 
\draw[->, thick] (S1) -- node[midway, above, blue] {$0.8$} (S2); 
\draw[->, thick] (S2) -- node[midway, above, blue] {$b, 0.2$} (S4); 
\draw[->, thick] (S2) -- node[midway, above left, blue] {$b, 0.8$} (S3);
\draw[->, thick] (S2) -- node[midway, below, blue] {$a, 1$} (6.5, -1);
\draw[->, thick] (S3) -- node[midway, below, blue] {1} (S2); 
\draw[->, thick] (S4) -- node[midway, above, blue] {1} (S3); 
\draw[->, thick] (S5) -- node[midway, above, blue] {1} (S6); 
\draw[->, thick] (S6) -- node[midway, below, blue] {1} (4, -3.25);
\draw[->, thick] (S7) -- node[midway, below left, blue] {1} (S6);
\draw[->, thick] (S9) -- node[midway, above, blue] {0.6} (6.625, -2.5);
\draw[->, thick] (S9) -- node[midway, below, blue] {0.4} (S6);
\draw[->, thick] (S8) -- node[midway, above, blue] {0.3} (S1);
\draw[->, thick] (S8) -- node[midway, right, blue] {0.7} (S2);

\draw[->, thick] (.5, -1) -- (S1);

\draw[red, dashed, thick] (7.5, -0.75) ellipse (1cm and 1.8cm);
\draw[red, dashed, thick] (6.625, -1.5) ellipse (1.8cm and 1cm);
\draw[red, dashed, thick] (3, -3.25) ellipse (1cm and 1.8cm);

\draw[->, thick] (S10) edge [loop right] node[right, blue] {1} (S10);
\end{tikzpicture}
\caption{The product MDPST $\M^\times$, where the sub-MDPST $\M_{sub}^\times$ (for the first iteration) is the part within the shaded box. }
\label{fig:mdpst_v1}
\end{figure}
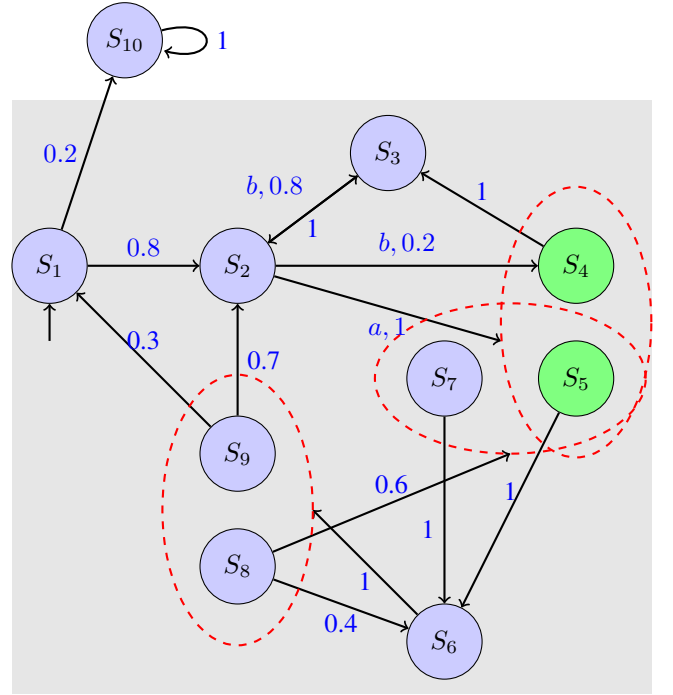

\begin{figure}[t]
\centering
\begin{tikzpicture}

\node[circle, draw, fill=blue!20, minimum width=1cm, minimum height=0.5cm] (S1) at (0.5, 0) {$S_1$};
\node[circle, draw, fill=blue!20, minimum width=1cm, minimum height=0.5cm] (S2) at (3, 0) {$S_2$};
\node[circle, draw, fill=blue!20, minimum width=1cm, minimum height=0.5cm] (S3) at (5, 1.5) {$S_3$};
\node[circle, draw, fill=green!50, minimum width=1cm, minimum height=0.5cm] (S4) at (7.5, 0) {$S_4^{in}$};
\node[circle, draw, fill=blue!20, minimum width=1cm, minimum height=0.5cm] (S42) at (7.5, 2) {$S_4^{out}$};
\node[circle, draw, fill=green!50, minimum width=1cm, minimum height=0.5cm] (S5) at (7.5, -1.5) {$S_5^{in}$};
\node[circle, draw, fill=blue!20, minimum width=1cm, minimum height=0.5cm] (S52) at (7.5, -4.5) {$S_5^{out}$};
\node[circle, draw, fill=blue!20, minimum width=1cm, minimum height=0.5cm] (S6) at (5.75, -5) {$S_6$};
\node[circle, draw, fill=blue!20, minimum width=1cm, minimum height=0.5cm] (S7) at (5.75, -1.5) {$S_7$};
\node[circle, draw, fill=blue!20, minimum width=1cm, minimum height=0.5cm] (S8) at (3, -2.5) {$S_9$};
\node[circle, draw, fill=blue!20, minimum width=1cm, minimum height=0.5cm] (S9) at (3, -4) {$S_8$};

\draw[->, thick] (S1) -- node[midway, above, blue] {$0.8$} (S2); 
\draw[->, thick] (S2) -- node[midway, above, blue] {$b, 0.2$} (S4); 
\draw[->, thick] (S2) -- node[midway, above left, blue] {$b, 0.8$} (S3);
\draw[->, thick] (S2) -- node[midway, below, blue] {$a, 1$} (6.5, -1);
\draw[->, thick] (S3) -- node[midway, below, blue] {1} (S2); 
\draw[->, thick] (S42) -- node[midway, above, blue] {1} (S3); 
\draw[->, thick] (S52) -- node[midway, below right, blue] {1} (S6); 
\draw[->, thick] (S6) -- node[midway, below, blue] {1} (4, -3.25);
\draw[->, thick] (S7) -- node[midway, below left, blue] {1} (S6);
\draw[->, thick] (S9) -- node[midway, above, blue] {0.6} (6.625, -2.5);
\draw[->, thick] (S9) -- node[midway, below, blue] {0.4} (S6);
\draw[->, thick] (S8) -- node[midway, above, blue] {0.3} (S1);
\draw[->, thick] (S8) -- node[midway, right, blue] {0.7} (S2);

\draw[red, dashed, thick] (7.5, -0.75) ellipse (1cm and 1.8cm);
\draw[red, dashed, thick] (6.625, -1.5) ellipse (1.8cm and 1cm);
\draw[red, dashed, thick] (3, -3.25) ellipse (1cm and 1.8cm);

\draw[->, thick] (S4) edge [loop right] node[right, blue] {$\tau_0$} (S4);
\draw[->, thick] (S5) edge [loop right] node[right, blue] {$\tau_0$} (S5);


\end{tikzpicture}
\caption{The constructed product MDPST $\hat{\M}_{sub}^\times$.} \label{fig:mdpst_v2}
\end{figure}
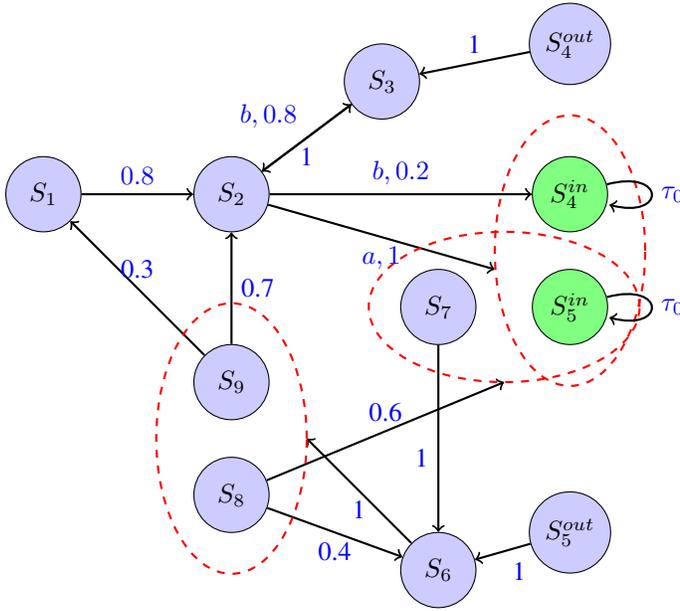

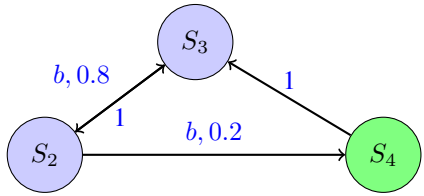
\begin{figure}[t]
\centering
\begin{tikzpicture}

\node[circle, draw, fill=blue!20, minimum width=1cm, minimum height=0.5cm] (S2) at (3, 0) {$S_2$};
\node[circle, draw, fill=blue!20, minimum width=1cm, minimum height=0.5cm] (S3) at (5, 1.5) {$S_3$};
\node[circle, draw, fill=green!50, minimum width=1cm, minimum height=0.5cm] (S4) at (7.5, 0) {$S_4$};

\draw[->, thick] (S2) -- node[midway, above, blue] {$b, 0.2$} (S4); 
\draw[->, thick] (S2) -- node[midway, above left, blue] {$b, 0.8$} (S3);
\draw[->, thick] (S3) -- node[midway, below, blue] {1} (S2); 
\draw[->, thick] (S4) -- node[midway, above, blue] {1} (S3); %
\end{tikzpicture}
\caption{The product MDPST $\M_{sub}^\times$ for the second iteration.}
\label{fig:WR}
\end{figure}

\subsection{Proof of Theorem \ref{thm1}}

We prove Theorem \ref{thm1} in two steps. 

\paragraph{Step 1} We show the correctness of Algorithm 1, i.e., that the output $W^\times$ of Algorithm 1 is indeed the WR of the product MDPST $\M^\times$. More specifically, for every state $s^\times\in W^{\times}$, we have that 
    \begin{itemize}
        \item there exists a strategy $\sigma^\times$ from $s^\times$ such that $\prob_{{\mathcal{M}^\times}}^{\sigma^\times(s^\times)}(\square\diamondsuit \acc^{\times}) = 1$.
    \end{itemize}

    First, for a state $s^\times \in W^{\times}$, $\prob_{{\mathcal{M}^\times}}^{\sigma^\times(s^\times)}(\diamondsuit \acc^{\times}) = 1$, since $\hat{V}_{sat}(s^\times) = 1$, which means that there exists a strategy starting from $s^\times$ and leading to the virtual copies of accepting states in $I_{in}$ with probability one.
    If $s \in W^{\times}$, there must be the virtual copies of accepting states in $I_{out}$ that belong to $W^{\times}$, otherwise those states $s^{out} \in I_{out}$ will be removed from $\acc^{\times}$ (cf. line 10).
    Since $s \in W^{\times}$, we have a strategy $\sigma_1$ to generate a run from $s$ reaching an accepting state, say $s^{in}_1 \in I_{in}$, against any nature $\gamma$.
    Then we know that the virtual copy $s^{out}_1 \in I_{out}$ also belongs to $W^{\times}$, otherwise the accepting state $s_1$ will be removed from $\acc^{\times}$ (cf. line 10).
    As a result, there exists a strategy $\sigma_2$ to generate a run from $s^{out}_1$ (and thus $s_1$) to an accepting state $s^{in}_2\in I_{in}$ against any nature $\gamma$ with probability one.
    Similarly, from $s^{out}_2$, we can extend the run to another accepting state $s^{in}_3$, and so on.
    Since the number of accepting states in $W^{\times}$ is finite, there must be an accepting state that is visited infinitely often.
    Hence, by merging the virtual copies of accepting states into their original state in this generated run, we can obtain an accepting run in $\M^{\times}$.
    Moreover, the probability measure of this run is $1$, as each constructed fragment run has probability measure $1$.
    By combining all these strategies $\sigma_1, \sigma_2, \cdots, \sigma_n$ constructed up to the point where the nature $\gamma$ has repeated its behavior as it only has finite memory, we can obtain the strategy $\sigma = \sigma_1 \cdot \sigma_2 \cdots \sigma_n$.
    It immediately follows that $\prob_{{\mathcal{M}^\times}}^{\sigma^\times(s^\times)}(\square\diamondsuit \acc^{\times}) = 1$.

As long as a run gets trapped in the WR, we have a strategy to keep the run in the WR with probability one.
It immediately follows that:
\begin{equation*}
    \begin{aligned}
        \max\limits_{{\sigma^\times}\in {\Pi_{{\M}^\times}}} \{\prob_{{\M}^\times}^{{{\sigma^\times}}}( \diamondsuit \square W^\times)\} &= \max\limits_{{\sigma^\times}\in {\Pi_{{\M}^\times}}} \{\prob_{{\M}^\times}^{{{\sigma^\times}}}( \diamondsuit W^\times)\}\\
    &= \max\limits_{{\sigma^\times}\in {\Pi_{{\M}^\times}}} \{\prob_{{\M}^\times}^{{{\sigma^\times}}}( \diamondsuit \square \acc^\times)\}.
    \end{aligned}
\end{equation*}

\paragraph{Step 2} We show that
\begin{equation*}
    \max\limits_{\sigma\in \Pi_{\M}}\{\prob_{{\M}}^\sigma(\varphi)\}
        = \max\limits_{{\sigma^\times}\in {\Pi_{{\M}^\times}}} \{\prob_{{\M}^\times}^{{{\sigma^\times}}}( \diamondsuit \square W^\times)\}
\end{equation*}
holds by verifying both sides of the inequality and subsequently constructing the induced policy on $\M$.

Let $p = \max_{\sigma} \{ {\rm Pr}_{{\mathcal{M}}}^\sigma(\varphi) \}$. In other words, there is an \emph{optimal} robust strategy $\sigma$ such that, for every possible nature $\gamma$ of $\mathcal{M}$, we have ${\rm Pr}_\mathcal{M}^{\sigma, \gamma}(\varphi) \geq p$ and there exists some nature $\gamma'$ such that ${\rm Pr}_\mathcal{M}^{\sigma, \gamma'}(\varphi) = p$.
For every fixed strategy $\sigma$ and nature $\gamma$, we can obtain a Markov chain $\mathcal{M}_{\sigma,\gamma}$.

To prove the direction of $\leq$, we will construct a strategy $\sigma^{\times}$ for $\mathcal{M}^{\times}$ such that, for every nature $\gamma^{\times}$, ${\prob^{\sigma^{\times}, \gamma^{\times}}_{\mathcal{M}^{\times}}(\diamondsuit \square W^{\times})} \geq p$.
Observe that every nature $\gamma^{\times}$ for $\mathcal{M}^{\times}$ induces a nature $\gamma$ for $\mathcal{\M}$ by ignoring the component from the automaton $\mathcal{\A}$.
Both Markov chains $\mathcal{M}_{\sigma, \gamma}$ and $\mathcal{M^{\times}_{\sigma^{\times}, \gamma^{\times}}}$ (ignoring the acceptance condition) will have the same probability to generate traces that satisfy $\varphi$, because the probability in $\mathcal{M}^{\times}$ comes only from $\mathcal{M}$.
Thus, for every nature $\gamma^{\times}$ we construct $\sigma^{\times}$ by following $\sigma$ of $\mathcal{M}$ within the state space $S \times Q_i$ and $S \times Q_{acc}$, where $Q_{i}$ (respectively, $Q_{acc}$) is the deterministic part of $Q$ before (respectively, after) seeing an $\epsilon$-transition.
This is possible because the automaton transitions within $Q_i$ and $Q_{acc}$ separately are deterministic.
To resolve the nondeterministic jump from $Q_i$ to $Q_{acc}$ (i.e., $\epsilon$-transitions), we select the automaton successors according to the behavior of the Markov chain $\M_{\sigma, \gamma}$.
That is, before $\M_{\sigma, \gamma}$ enters a bottom strongly connected component (BSCC), $\sigma^{\times}$ follows $\sigma$ in $\M$ and stays within the $Q_i$ part in $\aut$. 
The moment when $\M_{\sigma, \gamma}$ enters a BSCC from a product state $(s, q)$, according to \cite{sickert2016limit}, there is a way to select a successor $q'$ in $Q_{acc}$ for the state $q$ such that the trace generated by $\M_{\sigma, \gamma}$ from $(s, q)$ satisfies $\varphi$ if, and only if, the corresponding run from state $q'$ is accepted by $\aut$.
So, from state $(s, q)$, $\sigma^{\times}$ selects the action $\epsilon_{q'}$ and the run moves to $(s, q')$.
Afterwards, $\sigma^{\times}$ follows $\sigma$ in $\M$ and the run in $\aut$ is again deterministic.
We can see that $\sigma^{\times}$ basically imitates the behavior of $\sigma$ except for resolving the nondeterminism in the automaton $\aut$.
One can see that every trace $\rho$ in $\M_{\sigma, \gamma}$ corresponds to a trace $\rho^{\times}$ in $\M^{\times}_{\sigma^{\times}, \gamma^{\times}}$ with equal transition probabilities except for the $\epsilon$-transition, which has probability $1$.
Moreover, as aforementioned, if $\rho$ satisfies $\varphi$, the run $\rho^{\times}$, projected to the second component, is also an accepting run in $\aut$.

Now we only need to prove that the accepting trace $\rho^{\times}$ will stay within the WR $W^\times$ in the end.
Obviously, all states that occur in $\rho^{\times}$ infinitely many times, denoted $C$, can reach each other.
It must be the case that $C \subseteq W^{\times}$.
This is because the run $\rho^{\times}$ already enters a BSCC of $\M_{\sigma, \gamma}$ and the probability measure of this run in the BSCC is $1$.
Since $\rho$ is accepting, there must be an accepting state $s^{\times} \in \acc^{\times}$ that can visit itself with probability one under the strategy $\sigma$ against any adversarial nature $\gamma$.
Moreover, $s^{\times}$ clearly can be reached from the initial state.
Therefore, after applying Algorithm~1, $s^{\times}$ must belong to $W^{\times}$.
Since all states in the BSCC $C$ can reach accepting state $s^{\times}$ with probability one against any nature $\gamma$, we have that $C \subseteq W^{\times}$.
That is, the accepting run $\rho^{\times}$ eventually stays in $W^{\times}$. We then have that ${\prob^{\sigma^{\times}, \gamma^{\times}}_{\M^{\times}}(\diamondsuit \square W^{\times}) \geq \prob^{\sigma, \gamma}_{\mathcal{M}}(\varphi)} \geq p$ as $\sigma$ is an optimal robust strategy for $\M$.

The direction of $\geq$ is trivial. For any run that eventually stays within the WR $W^{\times}$, it will almost surely visit an accepting state infinitely many times.
Hence, the run must meet the B\"uchi condition.
It is clear that any strategy $\sigma^\times$ on $\M^\times$ can induce a policy for $\M$ by eliminating the $\epsilon$-transitions. Therefore, any path following $\sigma^\times$ that meets the B\"{u}chi condition will induce
a path of $\M$ that is accepting by $\A$ induced from $\varphi$, where the non-determinism of $\A$ is resolved by $\epsilon$-transitions of $\sigma^\times$, thus
satisfying $\varphi$.

Therefore, we have completed the proof.

\end{document}